\newtheorem{assumption}{Assumption}[section]
\newtheorem{lemma}{Lemma}[section]
\newtheorem{proposition}{Proposition}[section]
\newtheorem{definition}{Definition}[section]
\newtheorem{theorem}{Theorem}[section]
\newtheorem{remark}{Remark}[section]
\DeclareMathOperator*{\argmax}{arg\,max}
\DeclareMathOperator*{\argmin}{arg\,min}
\newcommand*{\state}{\bm x}
\newcommand*{\statet}[1]{\state_{#1}}
\newcommand*{\action}{\bm u}
\newcommand*{\actiont}[1]{\action_{#1}}
\newcommand*{\noise}{\bm \omega}
\newcommand*{\noiset}[1]{\noise_{#1}}
\newcommand*{\stateset}{\mathcal{X}}
\newcommand*{\safeset}{\mathcal{X}_{\mathrm{safe}}}
\newcommand*{\unsafeset}{\mathcal{X}_{\mathrm{unsafe}}}
\newcommand*{\actionset}{\mathcal{U}}
\newcommand*{\noisedist}{\rho}
\newcommand*{\policy}{\bm{\pi}}
\newcommand*{\safepolicy}{\policy_{\mathrm{safe}}}
\newcommand*{\hpolicy}{\bm{\eta}}
\newcommand*{\hpolicyaction}{\bm{n}}
\newcommand*{\dynamics}{\bm{f}^{\star}}
\newcommand*{\model}{\bm{\mu}}
\newcommand*{\ucbound}{\beta\bm{\Sigma}(\state,\action)\hpolicy(\state,\action)}
\newcommand*{\ucboundn}{\beta\bm{\Sigma}(\state,\action)\hpolicyaction}
\newcommand*{\expectation}[2]{\mathbf{E}_{#1}\!\!\left[#2\right]}
\newcommand*{\levelset}[1]{\mathcal{C}_{#1}^{\policy}}
\newcommand{\probability}[1]{\mathbb{P}\left(#1\right)}
\newcommand{\indicator}[1]{\mathbb{I}_{#1}}
\newcommand*{\Vlevelset}[1]{\mathcal{V}_{#1}}
\newcommand*{\return}{R}
\newcommand*{\reward}{r}
   \par\noindent{\bfseries\upshape Proof of #1\ }%
\pgfplotsset{width=10\columnwidth /10, compat = 1.13, 
	height = 55\columnwidth /100, grid= major, 
	legend cell align = left, ticklabel style = {font=\scriptsize},
	every axis label/.append style={font=\small},
	legend style = {font=\tiny},title style={yshift=-7pt, font = \small} }
\title{\LARGE \bf
Safe Reinforcement Learning via Confidence-Based Filters
}
\author{Sebastian Curi$^{1*}$, Armin Lederer$^{2*}$, Sandra Hirche$^{2}$, Andreas Krause$^{1}$
\thanks{This work was supported by the European Research Council (ERC) Consolidator Grant 
"Safe data-driven control for human-centric systems (CO-MAN)" under grant 
agreement number 864686 and under the European Unions Horizon
2020 research and innovation program grant agreement No
815943.
A. L. gratefully  acknowledges  financial  support from  the German Academic Scholarship Foundation.
}%
\thanks{$^{1}$Learning \& Adaptive Systems Group, Department of Computer Science, ETH Zurich, Switzerland. Email:
        {\tt\small [sebastian.curi,krausea]@inf.ethz.ch}}%
\thanks{$^{2}$Chair of Information-oriented Control, Department of Electrical and Computer Engineering, Technical University of Munich, Germany. Email:
	{\tt\small [armin.lederer, 
	hirche]@tum.de}}%
\thanks{*These authors contributed equally.}%
}
\begin{document}

\maketitle
\thispagestyle{empty}
\pagestyle{empty}

\setlength{\textfloatsep}{4pt}
\setlength{\floatsep}{4pt}
\setlength{\abovedisplayskip}{4pt}
\setlength{\belowdisplayskip}{4pt}

\begin{abstract}

Ensuring safety is a crucial challenge when deploying reinforcement learning (RL) to real-world systems.
We develop {\em confidence-based safety filters}, a control-theoretic approach for certifying state safety constraints for nominal policies learnt via standard RL techniques, based on probabilistic dynamics models.
Our approach is based on a reformulation of state constraints in terms of cost functions, reducing safety verification to a standard RL task. By exploiting the concept of {\em hallucinating inputs}, we extend this formulation to determine a ``backup’’ policy which is safe for the unknown system with high probability. Finally, the nominal policy is minimally adjusted at every time step during a roll-out towards the backup policy, such that safe recovery can be guaranteed afterwards.
We provide formal safety guarantees, and empirically demonstrate the effectiveness of our approach.

\end{abstract}



\section{INTRODUCTION}
When agents operate autonomously in unknown environments, they need the ability to adapt to new situations. This adaptiveness can be achieved using reinforcement learning~\cite{Sutton2017}, which allows autonomous agents to modify their behavior according to observations of the environment. Such reinforcement learning approaches have been demonstrated to achieve state-of-the-art performance on various problems where high-fidelity simulations are available \cite{Lillicrap2016}, but they cannot be directly applied to real-world autonomous systems because their safe operation must always be guaranteed. This has lead to safety being a major hurdle for the application of reinforcement learning in real-world applications \cite{Dulac-Arnold2019}.

\paragraph*{Related work} Due to this high relevance of safety in reinforcement learning, it has been the focus of a variety of recent approaches (see \cite{Garcia2015,Brunke2021} for surveys). 
A common framework for safe reinforcement learning are constrained Markov decision processes (CMDP) \cite{Altman1999}. In the CMDP setting, constraints are posed on the expected cumulative cost along roll-outs of a policy. This allows to treat the cumulative cost analogously to rewards, such that methods such as trust region policy optimization can be adapted to maintain constraint satisfaction when initialized with a safe policy~\cite{Achiam2017}. When no initially safe policy is known, a Lagrangian relaxation can be used to asymptotically find safe policies~\cite{Paternain2019}. This dual representation of the constrained optimization problem can also be combined with techniques such as upper confidence reinforcement learning to guarantee learning rates, e.g., for linear CMDPs~\cite{Ding2021}. However, constraint violation during training cannot be excluded in general, which often prevents the usage in real-world applications.\looseness=-1

Control theoretic methods consider safety through constraints on the system states, but often severely restrict the allowed policy and system classes. For example, linear quadratic regulators can be learned efficiently under polytopic constraints on the system states as shown by \cite{Dean2018}. The limitation to linear dynamics can be relaxed in the case of deterministic systems by employing model predictive control (MPC) techniques, such that the performance can be iteratively improved~\cite{Rosolia2018}.
To allow active exploration, MPC can be combined with reinforcement learning ideas by ensuring safety using a suitable MPC parameterization~\cite{Koller2018}.
Moreover, Monte-Carlo approximations can be used to deal with the stochastic dynamics  commonly found in reinforcement learning problems \cite{Capone2021}, although this comes at the price of a prohibitive computational complexity.

To achieve the beneficial properties of both control theoretical and reinforcement learning based approaches, it has recently been proposed to employ reinforcement learning for finding the optimal policy, while in a secondary step a control method is used to certify safety and, if necessary, adapt the applied action \cite{Fisac2019}. This approach can be realized using control barrier functions \cite{Taylor2019}, or a "backup" policy, which is locally safe in some region of the state space~\cite{Bastani2021}. While designing control barrier functions is challenging in general, determining locally safe policies often requires solving computationally expensive optimization problems on-line \cite{Bastani2021} or can only be applied to linearized systems \cite{Wabersich2021b}. Therefore, the practical applicability of such safety filters in combination with highly flexible RL techniques is currently limited.\looseness=-1

\looseness -1 \paragraph*{Our contributions} We mitigate these weaknesses by proposing {\em hallucinating upper confidence safety filters} for ensuring safety of arbitrary policies applied to stochastic, nonlinear systems for which merely a model with high probability error bounds is known. To this end, we first establish a {\em relationship between state constraints and level sets of value functions}. Using the concept of hallucinating inputs~\cite{Curi2020}, we show that these value functions can be efficiently estimated with standard reinforcement learning methods. Our approach can be naturally extended to {\em finding safe policies}, by formulating it as robust reinforcement learning problem. These safe policies can then be used for {\em computationally efficient on-line safety adaptation} of arbitrary reinforcement learning policies. We demonstrate the effectiveness of the proposed method on deep RL benchmark tasks.\looseness=-1

The remainder of this paper is structured as follows. In \cref{sec:problem}, we formalize the problem setting. The hallucinating upper confidence filter is explained and shown to yield safe policies in \cref{sec:hucf}. Finally, the performance of the safety filter is demonstrated in simulations in \cref{sec:eval}, before the paper is concluded in \cref{sec:conc}.

\section{PROBLEM STATEMENT AND BACKGROUND}\label{sec:problem}

We consider a discrete-time dynamical system
\begin{align}\label{eq:true_sys}
    \statet{k+1}=\dynamics(\statet{k},\actiont{k})+\noiset{k},
\end{align}
where $\statet{k}\in\stateset\subset\mathbb{R}^{d_x}$ are states, $\actiont{k}\in\actionset\subset\mathbb{R}^{d_u}$ control actions, $\noiset{k}\!\sim\!\noisedist$ is process noise sampled from a zero-mean probability distribution $\noisedist$\footnote{We consider constant noise distributions $\rho$ for notational simplicity, but our approach directly extends to state and action dependent distributions.}, and $\dynamics\!:\stateset\times\actionset\rightarrow\stateset$ denotes the unknown deterministic transition function. 
The control actions $\actiont{k}$ are determined using a policy $\policy\!:\stateset\rightarrow\actionset$, with the goal to maximize an expected cumulative return
\begin{subequations}
\begin{align}
    \!\!\!\return(\dynamics,\policy;\state) \!=&\expectation{\noise}{\sum\limits_{k=0}^{\infty}\gamma^k\reward(\statet{k},\policy(\statet{k}))},\\
    &\text{s.t. } \statet{k+1} \!=\! \dynamics(\statet{k}, \policy(\statet{k}))\!+\!\noiset{k},~\noiset{k}\!\sim\!\noisedist,\!\!\!\\
    &\quad~~\statet{0}=\state,
\end{align}
\end{subequations}
where 
$r:\stateset\times\actionset\rightarrow\mathbb{R}$ is a known immediate reward function and $\gamma\in(0,1)$ is a discount factor.

In practice, the policy $\policy$ must additionally ensure safety of the closed-loop dynamical system, e.g., because damage to the system described by $\dynamics$ must be avoided. In the RL literature, this is typically addressed through constrained Markov decision processes, which additionally consider a constraint on a cumulative cost function 
\begin{align}\label{eq:cumcost}
	\!C(\dynamics,\policy; \state)\!=\!\expectation{\noise}{\sum_{k=0}^{\infty} \gamma^k c(\statet{k})\!}\!<\!\xi,
\end{align}
where $c:\stateset\rightarrow\mathbb{R}$ is an immediate cost, $\xi\in\mathbb{R}$ is a constant specifying the constraint, and $\statet{k}$ is defined iteratively through \eqref{eq:true_sys} with actions $\actiont{k}=\policy(\statet{k})$ and initial state $\statet{0}=\state$. Therefore, an optimization problem of the form 
\begin{subequations}
\begin{align}\label{eq:opt_pol}
    \policy^*=&\argmax\limits_{\policy}R(\dynamics,\policy;\state)\\
    &~\text{s.t. } C(\dynamics,\policy; \state) < \xi \label{eq:opt_pol_constraint}
\end{align}
\end{subequations}
is usually solved to determine safe policies. 

While this problem can be directly solved by adapting standard RL algorithms with techniques akin to Lagrangian relaxation \cite{Paternain2019}, this approach generally cannot ensure safety {\em during} training. 
Moreover, it does not reflect the fact that the safety of many systems is defined in terms of safe and unsafe states classified into a set of safe states $\safeset\subset\stateset$ and its complement $\unsafeset=\stateset\setminus\safeset$. 
For example, an autonomously driving car should not leave the road, which directly defines the road as $\safeset$. 
When using the natural indicator $\bm{1}_{\bm{x}\in\unsafeset}$ as cost function, satisfying \Cref{eq:opt_pol_constraint} bounds the discounted probability of violating the constraints by $\xi$. 
Nonetheless, this does not guarantee that constraints will not be violated when deploying $\policy^*$.\looseness=-1

Therefore, we consider safety in terms of state constraints $\statet{k}\in\safeset$, which we require to hold with high probability, since the process noise $\noise$ generally prevents deterministic guarantees. This leads to the following definition of safety.\looseness=-1
\begin{definition}
A policy $\policy$ is $K$-step $\delta$-safe for a state $\state\in\stateset$ if it holds that $\probability{\statet{k}\in\safeset~\forall k=0,\ldots,K|\statet{0}=\state}\!\geq\! 1-\delta$, where states $\statet{k}$ are defined in \eqref{eq:true_sys}.
\end{definition}
The concept of $K$-step $\delta$-safety is commonly found in stochastic model predictive control, where it is typically referred to as joint chance constraint \cite{Mesbah2016}. 
\begin{remark}
We consider finite values of $K$ because ensuring $\delta$-safety over an infinite horizon, i.e., $K=\infty$, is not possible for unbounded process noise $\noise$ in general. This can be easily seen for a system with $\dynamics=\bm{0}$ and i.i.d. zero mean Gaussian noise  $\noise$, which almost surely leaves any compact safe set $\safeset$ eventually.
\end{remark}
In order to obtain the optimal policy $\safepolicy^*$ ensuring $\delta$-safety, we generally need to consider the optimization problem
\begin{subequations}\label{eq:safefilt_prob}
\begin{align}
	\!\!\safepolicy^*=&\argmax\limits_{\policy}~R(\dynamics,\policy;\state)\\
	&~\text{s.t. }  \probability{\!\statet{k}\!\!\in\!\safeset~\forall k\!=\!0,\ldots,K|\statet{0}\!=\!\state\!}\!\geq\! 1\!-\!\delta.\!\label{eq:chanceConst}
\end{align}
\end{subequations}
Solving this optimization problem is challenging since there usually exists no closed-form expression for the probability \eqref{eq:chanceConst}, such that computationally expensive uncertainty propagation methods have to be employed, e.g., generalized polynomial chaos expansions \cite{Kim2013}. 

In order to efficiently determine approximate solutions for~\eqref{eq:safefilt_prob}, we follow the idea of \cite{Wabersich2021b} and separate it into two phases: an initial phase for determining a nominal policy $\policy^*$ using an arbitrary method, followed by an on-line phase in which a safety filter is employed to adapt the policy $\policy^*$ to ensure $K$-step $\delta$-safety. Since we cannot ensure safety  without any knowledge about $\dynamics$, we assume to have access to a set of {\em plausible} models $\mathcal{M} = \left\{ \bm{f} \mid |\bm{f}-\model| \leq \beta\bm{\sigma} \right\}$ described by a nominal model $\model\!:\stateset\!\times\actionset\!\rightarrow\!\mathbb{R}^{d_x}$, the state-action dependent
uncertainty about the model $\bm{\sigma}\!:\stateset\!\times\!\actionset\!\rightarrow\!\mathbb{R}^{d_x}$, and a constant scaling factor $\beta\!\in\!\mathbb{R}_+$. 
We assume that this set of models is {\em well-calibrated}, i.e., $\dynamics\! \in\! \mathcal{M}$ with high probability, as formalized in the following.
\begin{assumption}[\cite{Curi2020}]\label{ass:calli}
    The statistical model is calibrated with respect to $\dynamics$, i.e., there exists a $\beta\in\mathbb{R}_+$ such that, with probability at least $1-\delta_f$, it holds jointly for all $\state,\action\in\stateset\times\actionset$ that $|\dynamics(\state,\action)-\model(\state,\action)|\leq
    \beta \bm{\sigma}(\state,\action)$, element-wise.
\end{assumption}
Since the statistical model is often obtained by applying supervised machine learning
to data obtained from policy roll-outs \cite{Curi2020}, the uncertainty usually decreases with the number of roll-outs. Thereby, this assumption typically enables less conservative and higher performant policies over time.\looseness=-1 

Using \cref{ass:calli}, we investigate the following sub-problems for the derivation of the safety filter.
\paragraph{State Constraints as Cumulative Cost}  In order to enable the application of reinforcement learning methods, we consider the problem of converting the $K$-step $\delta$-safety constraint \eqref{eq:chanceConst} for known dynamics $\dynamics$ into a constraint on an expected cumulative cost function. We show that this can be achieved by deriving a condition of the form
\begin{align}\label{eq:constraint2cost}
	\!\expectation{\noise}{C(\dynamics,\policy; \dynamics(\state,\policy)+\noise)}\!<\!\xi.
\end{align}
for suitably chosen immediate costs $c$, cf. \cref{sec:constraint2cost}.
\paragraph{Safety Filter} Using this condition, we derive a novel approach for computing safe policies $\safepolicy$ for systems with unknown dynamics $\dynamics$. This allows us to address the problem of ensuring the safety of a possibly unsafe nominal policy $\policy^*$ on-line using a confidence-based filter
\begin{subequations}\label{eq:safefilt_plausible}
\begin{align}
        \hat{\policy}(\state)=&\argmin\limits_{\action\in\actionset} \|\policy^*(\state)-\action\|,\qquad\\
        &~\text{s.t. } \max_{\bm{f} \in \mathcal{M}} \expectation{\noise}{C(\bm{f},\safepolicy; \state')} < \xi,\\
    &\qquad\state' = \bm{f}(\state, \action)+\noise,~\noise\sim\noisedist,    \end{align}
\end{subequations}
which outputs the closest safe action to $\policy^*$.
We derive tractable formulations for this filter in \cref{sec:hucf}.\looseness=-1



\section{EXPRESSING STATE CONSTRAINTS THROUGH COST FUNCTIONS}\label{sec:constraint2cost}

To reformulate the $\delta$-safety constraint into a constraint on cumulative costs, we first show in \cref{subsec:cost_sublevel} that sub-level sets of $C$ contained in $\safeset$ can be easily defined. Based on this result, we derive sufficient conditions on the cost function, which allow to conclude safety from cumulative cost constraints in \cref{subsec:forward_invar}, providing useful design freedom.\looseness=-1

\subsection{Safe Sub-Level Sets of the Cumulative Cost}\label{subsec:cost_sublevel}

For deriving the sub-level set $\levelset{\bar{\xi}}\!=\!\{\state\!\in\!\stateset\!: C(\dynamics,\policy;\state)\!<\! \bar{\xi} \}$, $\bar{\xi}\!\in\!\mathbb{R}$, which is contained in the set of safe states $\safeset$, we consider an immediate cost function  $c:\stateset\rightarrow\mathbb{R}$ satisfying 
\begin{align}\label{eq:tildec}
\underline{c} \leq c(\state)\leq \bar{c}  \quad\forall\state\in\stateset, \quad&c(\state)\geq \hat{c}  \quad\text{if } \state\in\unsafeset
\end{align}
for constants $\underline{c}, \bar{c},\hat{c}\!\in\!\mathbb{R}$. For example, using the indicator function $\bm{1}_{\bm{x}\in\unsafeset}$ as cost, which equals $1$ for $\bm{x}\in\unsafeset$ and $0$ otherwise, implies $\underline{c}\!=\!0$ and $\bar{c}\!=\!\hat{c}\!=\!1$. 
Using this definition, we can define an inner-approximation of the safe set of states $\safeset$ through the $\hat{c}$ sub-level set of the immediate cost $c$, which becomes exact if $c(\state)\!<\!\hat{c}$ for all $\state\!\in\!\safeset$. Moreover, we can define the expected cumulative cost using \eqref{eq:cumcost}.\looseness=-1

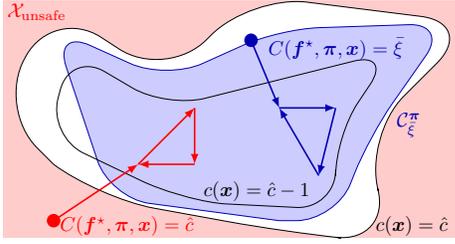
\begin{figure}
    \centering
    \scalebox{0.75}{
    \begin{tikzpicture}
    \fill[red!20] (-0.9,0.2) rectangle (7.2,4.4);
    \node[red] at (-0.3,4.2) {$\unsafeset$};
    
    \fill[rounded corners=1cm, white, draw=black] (0,1.5) -- (-1,2.5) -- (0,4.3) -- (2,3) -- (3.5,4.5)  -- (7.5,4) -- (5.5,2.0) -- (6,0.1) -- (4.1,0.3) -- (0.8,0.9) -- (0,1.5);
    \node at (6.35,0.4) {$c(\state)=\hat{c}$};
    \draw[blue!70!black, rounded corners=0.7cm, fill=blue!20] (4,0.5) -- (5,0.5) -- (5.6,2) -- (7.0,4) -- (4,3.9) -- (2,3.) -- (0,3.9) -- (1,0.9) -- (4,0.5);

    \draw[rounded corners=0.7cm] (3,0.8) -- (5.,0.7) -- (5,2) -- (6,3.5) -- (2,2.5) -- (0.3,3.5) -- (-0.1,2) -- (2.2,0.9) -- (3,0.8);
    \node at (3.6,1.02) {$c(\state)=\hat{c}-1$};
    
    \draw[red, thick, -latex] (0,0.5) -- (1.5,1.5);
    \draw[red, thick, -latex] (1.5,1.5) -- (2.5,2.5);
    \draw[red, thick, -latex] (2.5,2.5) -- (2.5,1.5);
    \draw[red, thick, -latex] (2.5,1.5) -- (1.5,1.5);
    \fill[red] (0,0.5) circle (0.12cm);
    \node[red] at (1.3,0.4) {$C(\dynamics,\policy,\state)=\hat{c}$};
    
    \draw[blue!70!black, thick, -latex] (3.5,3.7) -- (4,2.5);
    \draw[blue!70!black, thick, -latex] (4,2.5) -- (5,2.5);
    \draw[blue!70!black, thick, -latex] (5,2.5) -- (4.7,1.3);
    \draw[blue!70!black, thick, -latex] (4.7,1.3) -- (4,2.5);
    \fill[blue!70!black] (3.5,3.7) circle (0.12cm);
    \node[blue!70!black] at (5.0,3.55) {$C(\dynamics,\policy,\state)=\bar{\xi}$};
    \node[blue!70!black] at (6.3,2.2) {$\levelset{\bar{\xi}}$};
    
    \end{tikzpicture}}
    \vspace{-0.3cm}
    \caption{The expected cumulative cost can be $0$ even if the immediate cost $c$ at the first state is greater than $0$, as this positive cost can be compensated by negative costs afterwards (red trajectory). Therefore, $\levelset{\hat{c}}\not\subset\safeset$, such that we have to consider the tightened threshold $\bar{\xi}$, which ensures that states $\state \in \levelset{\bar{\xi}}$ start with immediate cost $c(\state)\leq \hat{c}$ (blue trajectory).}
    \label{fig:levelset}
\end{figure}

While one might think that the definition of the immediate cost $c$ in \eqref{eq:tildec} ensures that the $\hat{c}$ sub-level set $\levelset{\hat{c}}$ of $C$ is also contained in the safe set of states $\safeset$, this is not true in general. As illustrated by the red trajectory in \cref{fig:levelset}, the cumulative cost $C$ can equal $\hat{c}$ even if the immediate cost $c$ in the initial state is greater than $\hat{c}$, since negative costs of following states along the trajectory can compensate it. Therefore, the sub-level set $\levelset{\hat{c}}$ is generally not completely contained in the set of safe states $\safeset$, such that we must consider a tightened threshold $\bar{\xi}$. Due to the lower bound $\underline{c}$ of the cost $c$, this constant $\bar{\xi}$ can be determined using the following lemma.\looseness=-1
\begin{lemma}\label{lem:levelset}
Consider an immediate cost function $c\!:\stateset\rightarrow\mathbb{R}$ satisfying \eqref{eq:tildec}. Then, it holds that $\levelset{ \bar{\xi}}\subset\stateset_{\mathrm{safe}}$, where
\begin{align}\label{eq:xi1}
    \bar{\xi}= \gamma\min\limits_{\state\in\stateset}C(\dynamics,\policy;\state)+\hat{c}.
\end{align}
\end{lemma}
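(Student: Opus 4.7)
The plan is to prove the statement by contrapositive: assume $\state \in \unsafeset$ and show that $C(\dynamics,\policy;\state) \geq \bar{\xi}$, so that $\state \notin \levelset{\bar{\xi}}$.

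The key ingredient is the Bellman-style recursion satisfied by $C$. Starting from the definition \eqref{eq:cumcost}, I would peel off the $k=0$ term and re-index:
\begin{align*}
C(\dynamics,\policy;\state)
&= c(\state) + \gamma\,\expectation{\noise}{\sum_{k=0}^{\infty}\gamma^{k} c(\statet{k+1})\,\bigg|\,\statet{0}=\state} \\
&= c(\state) + \gamma\,\expectation{\noise}{C\!\left(\dynamics,\policy;\dynamics(\state,\policy(\state))+\noise\right)}.
\end{align*}
Because the inner argument is itself a state in $\stateset$, the inner expectation is lower-bounded pointwise by $\min_{\state'\in\stateset} C(\dynamics,\policy;\state')$, and hence so is its expectation under $\noise$.

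Now I would invoke the hypothesis $\state\in\unsafeset$, which by \eqref{eq:tildec} gives $c(\state)\geq \hat{c}$. Combining this with the lower bound on the second term yields
\begin{align*}
C(\dynamics,\policy;\state) \;\geq\; \hat{c} + \gamma\min_{\state'\in\stateset}C(\dynamics,\policy;\state') \;=\; \bar{\xi},
\end{align*}
which is exactly the definition of $\bar{\xi}$ in \eqref{eq:xi1}. Since membership in $\levelset{\bar{\xi}}$ requires the strict inequality $C(\dynamics,\policy;\state)<\bar{\xi}$, it follows that $\state\notin\levelset{\bar{\xi}}$. Contrapositively, every $\state\in\levelset{\bar{\xi}}$ lies in $\safeset$.

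I expect no serious obstacle here; the only subtlety worth being careful about is that the minimum $\min_{\state\in\stateset}C(\dynamics,\policy;\state)$ is well defined and finite, which is guaranteed by the uniform bounds $\underline{c}\leq c\leq\bar{c}$ in \eqref{eq:tildec} together with $\gamma\in(0,1)$ (so $C$ is bounded in $[\underline{c}/(1-\gamma),\bar{c}/(1-\gamma)]$, and the infimum can be taken as the minimum in the bound even if not attained). Apart from this sanity check, the argument is a single application of the dynamic programming decomposition plus the immediate-cost lower bound on unsafe states.
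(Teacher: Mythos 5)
Your proof is correct and follows essentially the same route as the paper's: apply the Bellman decomposition to get $C(\dynamics,\policy;\state)\geq c(\state)+\gamma C_{\min}$, then use $c(\state)\geq\hat{c}$ on $\unsafeset$ to conclude $C(\dynamics,\policy;\state)\geq\bar{\xi}$, which excludes $\state$ from the strict sub-level set $\levelset{\bar{\xi}}$. If anything, you are slightly more careful than the paper, which asserts a strict inequality $c(\state)>\hat{c}$ on $\unsafeset$ even though \eqref{eq:tildec} only gives $\geq$; your observation that the strict inequality in the definition of $\levelset{\bar{\xi}}$ makes the non-strict bound sufficient closes that cosmetic gap.
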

\begin{proof}
Due to the lower bound for $c$, $C$ is lower bounded, such that we obtain $C(\dynamics,\policy,\state)\geq c(\state)+\gamma C_{\min}$ from Bellman equation, where $C_{\min}=\min_{\state\in\stateset}C(\dynamics,\policy;\state)\geq \frac{\underline{c}}{1-\gamma}$. Moreover, due to condition \eqref{eq:tildec} we have $c(\state)>\hat{c}$ for $\unsafeset$, which yields $C(\dynamics,\policy,\state)> \hat{c}+\gamma C_{\min}$ for all $\state\in\unsafeset$. Therefore, the level set $\levelset{\bar{\xi}}$ is completely contained in $\safeset$, i.e., $\levelset{\bar{\xi}}\subset\stateset_{\mathrm{safe}}$, which concludes the proof.
\end{proof}
This lemma relies on the idea that the cumulative cost can be lower bounded by $\min_{\state\in\stateset}C(\dynamics,\policy;\state)$, such that any state with immediate cost $c$ greater than $\hat{c}$ also must have an expected cumulative cost greater than $\bar{\xi}$. 
For the example of the indicator cost, $\bar{\xi}$ can be straightforwardly computed as $\bar{\xi}\!=\!1$ since $C$ is trivially lower bounded by $0$. It is straightforward to see that this choice of cost function generally allows to accurately approximate $\safeset$ using $\levelset{\bar{\xi}}$, and indeed $\safeset\!=\!\levelset{\bar{\xi}}$ is possible for deterministic dynamics with $\noise\!=\!\bm{0}$. However, \cref{lem:levelset} is not limited to indicator type cost functions, but applies to arbitrary costs $c$ satisfying \eqref{eq:tildec}. This is particularly beneficial for computing optimal policies using $C$, where informative gradients may aid the convergence of common RL techniques. Thus, \cref{lem:levelset} allows a flexible approximation of the safe set $\safeset$ suitable for the optimization-based approaches employed in the following sections.\looseness=-1


\subsection{Cumulative Cost Safety Conditions}\label{subsec:forward_invar}

In order to express $K$-step $\delta$-safety through expected cumulative costs $C$, it remains to derive conditions which ensure that the system state $\statet{k}$ stays inside the sub-level set $\levelset{\bar{\xi}}$ for all $k=1,\ldots,K$ with probability $\delta$. For this purpose, we employ techniques from stochastic stability analysis \cite{Li2013}. 
In particular, we define the cost-value function of a policy at a given state $\state$ as $V_{\policy}(\state) \equiv C(\dynamics\!,\policy;\state)$. 

\begin{theorem}\label{thm:const2cost}
Consider an immediate cost function $c:\stateset\rightarrow\mathbb{R}$, which satisfies \eqref{eq:tildec}. Assume there exists a class~$\mathcal{K}$ function\footnote{A function $\alpha: \mathbb{R}_{0,+}\rightarrow\mathbb{R}_{0,+}$ is a class~$\mathcal{K}$ function, if it is monotonically increasing and $\alpha(0)=0$.} $\alpha:\mathbb{R}\rightarrow\mathbb{R}_{0,+}$, such that
\begin{align}
\label{eq:Lyap_cond_reform}
    &\expectation{\noise}{V_{\policy}(\state')}\leq V_{\policy}(\state)\!-\!\alpha(V_{\policy}( \state)\!-\!C_{\min})
\end{align}
holds for all $\state\in\safeset$ with $\state'=\dynamics(\state,\policy(\state))+\noise)$. Then, 
\begin{align}
    \expectation{\noise}{V_{\policy}(\state')}\leq\xi<\bar{\xi}
\end{align}
guarantees that the policy $\policy$ is $K$-step $\delta(\xi)$-safe.
\end{theorem}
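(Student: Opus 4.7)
The plan is to recast the $K$-step safety event as a first-exit time event for the scalar process $V_{\policy}(\statet{k})$ and bound it via a supermartingale maximal inequality. The bridge between the two viewpoints is \cref{lem:levelset}: since $\levelset{\bar\xi}\subset\safeset$, its contrapositive shows that every $\state\in\unsafeset$ satisfies $V_{\policy}(\state)\geq\bar\xi$, so any trajectory entering the unsafe set must push the cost-value function above the level $\bar\xi$.

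First I would shift the cost-value function by defining $\tilde V(\state)\coloneqq V_{\policy}(\state)-C_{\min}$, which is non-negative by construction since $C_{\min}\geq \underline c/(1-\gamma)$ is finite. Using that $\alpha$ is class $\mathcal{K}$ (so $\alpha\geq 0$ on its domain), the hypothesis \eqref{eq:Lyap_cond_reform} rewrites as $\expectation{\noise}{\tilde V(\state')}\leq \tilde V(\state)-\alpha(\tilde V(\state))\leq \tilde V(\state)$ for every $\state\in\safeset$. Introducing the first-exit time $\tau\coloneqq\inf\{k\geq 0:\statet{k}\in\unsafeset\}$ with initial condition $\statet{0}=\state'$, the stopped process $M_k\coloneqq\tilde V(\statet{k\wedge\tau})$ then becomes a genuine non-negative supermartingale with respect to the natural filtration of $\{\noiset{k}\}$, because the drift inequality is invoked only at times $k<\tau$ (where $\statet{k}\in\safeset$) and $M_k$ is constant for $k\geq\tau$.

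Next I would apply Ville's maximal inequality to $M_k$ with threshold $a=\bar\xi-C_{\min}$, obtaining $\probability{\sup_{k\geq 0}M_k\geq \bar\xi-C_{\min}}\leq \expectation{}{M_0}/(\bar\xi-C_{\min})\leq (\xi-C_{\min})/(\bar\xi-C_{\min})$, where the last bound uses the assumed one-step inequality $\expectation{\noise}{V_{\policy}(\state')}\leq\xi$. By the bridge observation from \cref{lem:levelset}, on $\{\tau\leq K\}$ we have $\tilde V(\statet{\tau})\geq \bar\xi-C_{\min}$, so $\{\tau\leq K\}\subset\{\sup_{k\geq 0}M_k\geq \bar\xi-C_{\min}\}$. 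Defining $\delta(\xi)\coloneqq (\xi-C_{\min})/(\bar\xi-C_{\min})$, this inclusion yields $\probability{\statet{k}\in\safeset~\forall k=0,\dots,K\mid\statet{0}=\state'}\geq 1-\delta(\xi)$, and the strict inequality $\xi<\bar\xi$ ensures $\delta(\xi)<1$, as required.

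The main obstacle I anticipate is the careful handling of the stopping-time construction: the drift condition is only assumed on $\safeset$, so Ville's inequality cannot be applied to $\tilde V(\statet{k})$ directly on the full trajectory, and one must instead work with the stopped process $M_k$ (or, equivalently, extend $V_{\policy}$ to a super-solution outside $\safeset$). A secondary subtlety is that $\state'$ is itself random and could already lie outside $\safeset$; this case is, however, automatically absorbed by the maximal inequality because the supremum includes $k=0$ and the bound involves only $\expectation{}{M_0}$.
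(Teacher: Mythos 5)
Your argument is correct and establishes the claim, but it takes a genuinely different route from the paper's. The paper proves \cref{thm:const2cost} by combining \cref{lem:levelset} with \cref{lem:stabBasic}: there, a decreasing sequence of thresholds $\theta^j$ is generated by iterating the drift $\alpha$, the one-step transition probabilities between the resulting sub-level sets $\Vlevelset{\theta^j}$ are bounded via a Markov-type inequality (\cref{lem:constraint}), and these bounds are assembled into a substochastic matrix with an absorbing unsafe state whose $K$-th power yields the horizon-dependent constant $\delta_{\mathrm{FL}}(\xi)$ in \eqref{eq:delta2}. You use the same bridge from \cref{lem:levelset} (every $\state\in\unsafeset$ satisfies $V_{\policy}(\state)>\bar{\xi}$), but then run the classical stopped-supermartingale argument: $V_{\policy}-C_{\min}$, stopped at the first entry into $\unsafeset$, is a nonnegative supermartingale because the drift condition is invoked only on $\safeset$, and Ville's maximal inequality gives the closed-form, horizon-uniform bound $\delta(\xi)=(\xi-C_{\min})/(\bar{\xi}-C_{\min})<1$. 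The trade-off is instructive: your bound is simpler, holds for every $K$ (even $K=\infty$ under the stated drift hypothesis), and uses only $\alpha\geq 0$ --- which shows the strict class-$\mathcal{K}$ decrease is essentially unused by your route --- whereas the paper's $\delta_{\mathrm{FL}}(\xi)$ exploits the contraction $\alpha$ and the finite horizon and is the specific constant propagated into \cref{lem:safe_cert} and \cref{th:safetyFilt} via $\delta=\delta_{\mathrm{FL}}(\xi)+\delta_f-\delta_{\mathrm{FL}}(\xi)\delta_f$. So if $\delta(\xi)$ in the statement is read as the paper's $\delta_{\mathrm{FL}}(\xi)$, you prove the theorem with a different (arguably cleaner) constant rather than that exact one; the qualitative safety guarantee goes through either way. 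Two details you handled correctly and should keep explicit: Ville's inequality needs $\bar{\xi}-C_{\min}>0$, which holds because $V_{\policy}\geq C_{\min}$ pointwise forces $\xi\geq C_{\min}$ whenever the hypothesis $\expectation{\noise}{V_{\policy}(\state')}\leq\xi$ is satisfiable, and the drift condition is available only on $\safeset$, so stopping at the first exit (rather than applying the maximal inequality to the unstopped process) is essential.
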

\begin{proof}
The result directly follows from \cref{lem:levelset} and \cref{lem:stabBasic}, which ensure $\delta$-safety with $\delta\!=\!\delta_{\mathrm{FL}}(\xi)$.
\end{proof}
Condition \eqref{eq:Lyap_cond_reform} effectively resembles a Foster-Lyapunov drift condition, which is satisfied if stochastic stability can be shown with $V_{\policy}$ as a Lyapunov function \cite{Meyn1993}. Since  stability is a well-studied problem, it has been shown that this condition can be satisfied for many dynamics $\dynamics$, e.g., systems which are asymptotically controllable with respect to the immediate cost $c$ \cite{Gaitsgory2018}. 
In contrast to stability theory, \cref{thm:const2cost} does require $C$ to be positive definite or the existence of a class $\mathcal{K}$ function lower bounding $C$.
Therefore, the conditions of \cref{thm:const2cost} are slightly weaker than for stability.\looseness=-1

Due to the close relationship to stability, it is straightforward to see that the increase rate of $\alpha$ determines the convergence rate of the system. If $\alpha$ is only slowly growing, a relatively small noise realization can cause an increase in the expected cumulative cost, and thereby, increases the probability $\delta$ of leaving the safe set. This can be compensated by choosing a smaller value of $\xi$, such that there essentially is a larger margin between the safe initial states $\statet{0}$ and the unsafe set $\unsafeset$. Note that the noise distribution also affects the probability $\delta$ through \eqref{eq:Lyap_cond_reform}, since flat distributions with heavy tails generally cause higher values of $\expectation{\noise}{V_{\policy}(\state)}$ leading to smaller increase rates of $\alpha$.


\section{HALLUCINATING UPPER CONFIDENCE SAFETY FILTERS}\label{sec:hucf}

We now derive a tractable safety filter for unknown dynamics, for which merely a set of plausible models is available. To this end, we first show how the safety analysis of \cref{sec:constraint2cost} can be extended to unknown dynamics by reformulating it as reinforcement learning problem in \cref{subsec:uc-trick}. Based on this reformulation, we present a natural approach to obtain backup policies for the safety filter by computing a {\em safe policy} through robust reinforcement learning in \cref{subsec:max_safe}. Finally, the pre-computed backup policy is employed in a confidence-based safety filter for ensuring constraint satisfaction in \cref{subsec:safe_filt}.\looseness=-1

\subsection{Safety Certification with Unknown Dynamics}\label{subsec:uc-trick}

Since we assume only the availability of a set of plausible models $\mathcal{M}$, but not the true dynamics $\dynamics$, we cannot determine $C$ and consequently cannot directly exploit \Cref{thm:const2cost} for determining a safe policy. 
To overcome this issue, we must be \emph{pessimistic} about the dynamics. 
In particular, we define the \emph{pessimistic cost-value} as
\begin{align} \label{eq:pessimistic_costvalue}
    V_{\policy}^{(p)}(\state) \equiv \max_{\bm{f} \in \mathcal{M}} C(\bm{f},\policy;\state).
\end{align}
It is straightforward to see that $V_{\policy}^{(p)}(\state) \leq \xi \Rightarrow V_{\policy}(\state) \leq \xi$ due to \cref{ass:calli}. 
Hence, the technique for unknown models reduces to solving \eqref{eq:pessimistic_costvalue} and verifying if its value satisfies the conditions of \Cref{thm:const2cost}. 
To solve the pessimistic verification problem \eqref{eq:pessimistic_costvalue} we reparameterize the functions $\bm{f}\in\mathcal{M}$ following \cite{Curi2020} as 
\begin{align}\label{eq:reparam}
\bm{f}(\state,\action)=\model(\state,\action)+\ucbound,
\end{align}
where $\hpolicy:\stateset\times\actionset\!\rightarrow\![-1,1]^{d_x}\!$,  $\bm{\Sigma}(\state,\action)\!=\!\mathrm{diag}(\bm{\sigma}(\state,\action))$ and $\model$ is the model mean function. We refer to $\hpolicy$ as the hallucinating policy in the sequel since it acts on the outputs of the dynamics similarly as a policy acts on its inputs.
Moreover, the set of admissible functions $\hpolicy$ is defined via
\begin{align}
    \mathcal{N}&=\{\hpolicy:~ -1\leq \hpolicy(\state,\action)\leq 1~~ \forall \state,\action\in\stateset\times\actionset \}.
\end{align}

To determine that a policy is safe, we must verify that the value of the pessimistic estimate $V_{\policy}^{(p)}$ defined as
\begin{subequations}
\begin{align}
    \!\!\!V_{\policy}^{(p)}(\state)  =& \max_{\hpolicy \in \mathcal{N}} \mathbb{E}_{\noise}\Bigg[ \sum_{k=0}^\infty \gamma^k c(\statet{k})\Bigg] \\
    &\text{s.t. } \statet{k + 1} = \bm{f}(\statet{k},\policy(\statet{k})) + \noiset{k},~  \noiset{k} \sim  \noisedist, \label{eq:ucb_model}\\
    &\quad~~\bm{f}(\state,\action) \!=\!\model(\state,\action)\!+\!\ucbound,\!\!\label{eq:reparam_model} \\
    \!\!\!&\quad~~\statet{0}\!=\!\state 
\end{align}
\end{subequations}
satisfies the conditions of \Cref{thm:const2cost}. 
Computing $V_{\policy}^{(p)}$ is equivalent to an RL problem that can be solved using standard RL algorithms, where the policy is $\hpolicy$ and the dynamics is given by \eqref{eq:reparam}, i.e., it can be done purely in simulation. 
Using this formulation as an optimization of the hallucinating policy $\hpolicy$, it is straightforward to extend \cref{thm:const2cost} to unknown dynamics $\dynamics$ as shown in the following proposition.
\begin{proposition}\label{lem:safe_cert}
Consider a set of plausible models $\mathcal{M}$ satisfying \cref{ass:calli} and an immediate cost $c$,  which satisfies~\eqref{eq:tildec}. If $V_{\policy}^{(p)}(\state)$ satisfies 
\begin{align}\label{eq:pessLF}
    &\max_{\bm{\eta}\in\mathcal{N}}\expectation{\noise}{V_{\policy}^{(p)}(\state')}\leq
    V_{\policy}^{(p)}(\state)\!-\!\alpha(V_{\policy}^{(p)}(\state)\!-\!C_{\min})
\end{align}
    for all $\state\!\in\!\safeset$, where $\state'$ is the next state defined through the reparameterized dynamics \eqref{eq:ucb_model}, \eqref{eq:reparam_model}, for all $\state\!\in\!\safeset$, then, $\max_{\bm{\eta}\in\mathcal{N}}\!\mathbf{E}_{\noise}[V_{\policy}^{(p)}(\state')]\!\leq\! \xi\!<\!\bar{\xi}$
guarantees $K$-step $\delta$-safety of~$\policy$.
\end{proposition}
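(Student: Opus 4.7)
The plan is to mimic the proof of \cref{thm:const2cost} but with the pessimistic value $V_{\policy}^{(p)}$ of \eqref{eq:pessimistic_costvalue} playing the role of the true cost-value $V_{\policy}$. First, I would use \cref{ass:calli} to condition on the event $E=\{\dynamics\in\mathcal{M}\}$, which holds with probability at least $1-\delta_f$. On $E$, the reparameterization \eqref{eq:reparam} is realized by some admissible $\hpolicy^{\star}\in\mathcal{N}$, so the supremum in \eqref{eq:pessimistic_costvalue} immediately yields the pointwise domination $V_{\policy}^{(p)}(\state)\geq V_{\policy}(\state)=C(\dynamics,\policy;\state)$. Combined with \cref{lem:levelset}, this gives the inclusion $\{\state:V_{\policy}^{(p)}(\state)<\bar{\xi}\}\subset\levelset{\bar{\xi}}\subset\safeset$, so any sub-level set of $V_{\policy}^{(p)}$ below $\bar{\xi}$ is automatically contained in the safe set.

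Next, I would transport the drift inequality \eqref{eq:pessLF} onto the true dynamics. Because the max in \eqref{eq:pessLF} ranges over all of $\mathcal{N}$ and $\hpolicy^{\star}\in\mathcal{N}$ is the hallucination witnessing $\dynamics$, it follows for every $\state\in\safeset$ that
\begin{equation*}
\expectation{\noise}{V_{\policy}^{(p)}(\dynamics(\state,\policy(\state))+\noise)}\leq V_{\policy}^{(p)}(\state)-\alpha(V_{\policy}^{(p)}(\state)-C_{\min}).
\end{equation*}
This is a Foster--Lyapunov drift condition of exactly the form used in \cref{thm:const2cost}, now with $V_{\policy}^{(p)}$ as the Lyapunov candidate along trajectories of the unknown true system.

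Finally, I would invoke the same stochastic-stability argument that powers the proof of \cref{thm:const2cost}, applied to $V_{\policy}^{(p)}$ in place of $V_{\policy}$. The hypothesis $\max_{\hpolicy\in\mathcal{N}}\mathbf{E}_{\noise}[V_{\policy}^{(p)}(\state')]\leq\xi<\bar{\xi}$ seeds the argument at the first step, and iterating the drift inequality propagates the bound so that $\statet{k}$ remains in $\{V_{\policy}^{(p)}<\bar{\xi}\}\subset\safeset$ for $k=0,\ldots,K$ with probability at least $1-\delta_{\mathrm{FL}}(\xi)$ on $E$. A union bound with the complement of $E$ then delivers $K$-step $\delta$-safety with $\delta=\delta_{\mathrm{FL}}(\xi)+\delta_f$. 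The main obstacle I anticipate is technical rather than conceptual: one must check that the supremum in \eqref{eq:pessimistic_costvalue} is measurable and actually attained (or controllable via a maximizing sequence of hallucinating policies), so that $V_{\policy}^{(p)}$ is a legitimate Lyapunov candidate and the inequality transported in the second step is a bona fide consequence of the max-over-$\hpolicy$ in \eqref{eq:pessLF}. Beyond this regularity issue, no new ideas are required: the scheme cleanly reduces the unknown-dynamics case to the known-dynamics analysis already carried out in \cref{thm:const2cost}.
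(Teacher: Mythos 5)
Your proposal is correct and follows essentially the same route as the paper: conditioning on the calibration event so that the true dynamics is witnessed by some admissible hallucination, transporting the drift condition \eqref{eq:pessLF} onto the true system via the max over $\mathcal{N}$, and then invoking the stochastic-stability result (\cref{lem:stabBasic}) exactly as in \cref{thm:const2cost}. The only cosmetic difference is the final probability combination — you use the union bound $\delta=\delta_{\mathrm{FL}}(\xi)+\delta_f$, whereas the paper states $\delta=\delta_{\mathrm{FL}}(\xi)+\delta_f-\delta_{\mathrm{FL}}(\xi)\delta_f$ — which does not affect the validity of the argument.
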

\begin{proof}
Due to Assumption~\ref{ass:calli}, $V_{\policy}^{(p)}(\state) \!\leq\! \bar{\xi}$ implies $V_{\policy}(\state) \!\leq\! \bar{\xi}$
with probability at least $1-\delta_f$. Therefore, $\max_{\bm{\eta}\in\mathcal{N}}\hat{V}(\state')\leq \bar{\xi}$ implies that $\state\in\safeset$ due to \cref{lem:levelset}. 
Moreover, it can be directly shown that
\begin{align*}
    &\max_{\bm{\eta}\in\mathcal{N}}\expectation{\noise}{V_{\policy}^{(p)}(\dynamics(\state,\policy(\state))+\noise)}\leq \max_{\bm{\eta}\in\mathcal{N}}\expectation{\noise}{V_{\policy}^{(p)}(\state')}
\end{align*}
for $\state'$ the next state defined through the reparameterized dynamics \eqref{eq:ucb_model}, \eqref{eq:reparam_model} with probability at least $(1-\delta_f)$ due to Assumption~\ref{ass:calli}. 
Due to \eqref{eq:pessLF}, this yields
\begin{align*}
    \max_{\bm{\eta}\in\mathcal{N}}&\expectation{\noise}{V_{\policy}^{(p)}(\dynamics(\state,\policy(\state))+\noise)}\leq\\ &\qquad\qquad\qquad\qquad\quad V_{\policy}^{(p)}(\state)-\alpha(V_{\policy}^{(p)}(\state)-C_{\min}),
\end{align*}
such that
we can apply \cref{lem:stabBasic}. Hence safety follows with $\delta=\delta_{\mathrm{FL}}(\xi)+\delta_f-\delta_{\mathrm{FL}}(\xi)\delta_f$.
\end{proof}
When the model is known accurately, i.e., $\bm{\sigma}(\state,\action)=\bm{0}$ for all $\state,\action\in\stateset\times\actionset$, the conditions of \cref{lem:safe_cert} intuitively reduce to the conditions of \cref{thm:const2cost}. 



\subsection{Robust Reinforcement Learning of Safe Policies}\label{subsec:max_safe}
Based on the formulation of the $K$-step $\delta$-safety as an optimization problem in \cref{lem:safe_cert}, it is natural to augment the optimization problem to directly find $\delta$-safe policies by finding the policy that minimizes the pessimistic cost estimate $V_{\policy}^{(p)}$. 
Namely, we propose to obtain safe policies via\looseness=-1
%
\begin{align}\label{eq:maxSafePi}
    \safepolicy \coloneqq \argmin\limits_{\policy\in\Pi}  V_{\policy}^{(p)} = \argmin\limits_{\policy\in\Pi} \max_{\hpolicy\in \mathcal{N}}\expectation{\state}{C(\bm{f},\policy;\state)}
\end{align}
where the dynamics $\bm{f}$ is the reparameterized dynamics in \eqref{eq:reparam}.
As \eqref{eq:maxSafePi} can be seen as a robust RL problem, we refer 
to $\safepolicy$ as the \emph{learned safe policy} in the following.
To solve \eqref{eq:maxSafePi} we use standard robust RL techniques such as those in \cite{Curi2021,Pinto2017}, which perform gradient descent for $\policy$ and gradient ascent for~$\hpolicy$. 
%
%
Moreover, if the cost $c$ and the discount $\gamma$ allow to establish the safety of this system for some policy $\policy$, it is straightforward to show that \eqref{eq:maxSafePi} yields a $\delta$-safe policy.
\begin{proposition}\label{prop:maxSafe}
Assume the learned safe policy $\safepolicy$ is obtained using \eqref{eq:maxSafePi} with sufficiently expressive function classes $\Pi$, $\mathcal{N}$, and has a unique solution. 
If there exists a policy $\policy$ and a class $\mathcal{K}$ function $\alpha$ such that $V_{\policy}^{(p)}$ satisfies \eqref{eq:pessLF}, $\safepolicy$ is $K$-step $\delta$-safe for all $\state\!\in\!\stateset$ with $V_{\safepolicy}^{(p)}(\state)\!\leq\! \xi\!<\!\bar{\xi}$.\looseness=-1
\end{proposition}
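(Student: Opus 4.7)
The plan is to establish that the pessimistic cost-value $V_{\safepolicy}^{(p)}$ under the learned safe policy inherits the Foster-Lyapunov drift condition \eqref{eq:pessLF} from $V_{\policy}^{(p)}$, after which \cref{lem:safe_cert} can be invoked directly for $\safepolicy$. The assumption that $\Pi$ and $\mathcal{N}$ are sufficiently expressive, together with uniqueness of the min-max solution, is what makes this comparison meaningful: it ensures the reference policy $\policy$ from the hypothesis lies in $\Pi$ (so the argmin in \eqref{eq:maxSafePi} genuinely dominates it), and that $V_{\safepolicy}^{(p)}$ satisfies the min-max Bellman equation
\[V_{\safepolicy}^{(p)}(\state) = c(\state) + \gamma \max_{\hpolicy \in \mathcal{N}} \expectation{\noise}{V_{\safepolicy}^{(p)}(\state')},\]
and analogously for $V_{\policy}^{(p)}$.

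Next I would combine these two Bellman identities with the optimality bound $V_{\safepolicy}^{(p)} \leq V_{\policy}^{(p)}$. Introducing the non-negative gap $\Delta(\state) \coloneqq V_{\policy}^{(p)}(\state) - V_{\safepolicy}^{(p)}(\state) \geq 0$, subtracting the two Bellman equations yields the exact identity $\max_{\hpolicy} \expectation{\noise}{V_{\safepolicy}^{(p)}(\state')} = \max_{\hpolicy} \expectation{\noise}{V_{\policy}^{(p)}(\state')} - \Delta(\state)/\gamma$. Substituting the hypothesized drift \eqref{eq:pessLF} for $V_{\policy}^{(p)}$ on the right-hand side, rewriting $V_{\policy}^{(p)}(\state) = V_{\safepolicy}^{(p)}(\state) + \Delta(\state)$, and using monotonicity of the class~$\mathcal{K}$ function $\alpha$ together with $1 - 1/\gamma \leq 0$ (since $\gamma \in (0,1)$) should collapse all $\Delta$-dependent terms in the favorable direction and deliver \eqref{eq:pessLF} for $V_{\safepolicy}^{(p)}$ with the same $\alpha$.

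Once the drift condition is transferred to $V_{\safepolicy}^{(p)}$, the conclusion is immediate: for any state $\state$ with $V_{\safepolicy}^{(p)}(\state) \leq \xi < \bar{\xi}$, the inherited drift itself gives $\max_{\hpolicy \in \mathcal{N}} \expectation{\noise}{V_{\safepolicy}^{(p)}(\state')} \leq V_{\safepolicy}^{(p)}(\state) \leq \xi < \bar{\xi}$ (using $\alpha(\cdot) \geq 0$ on the non-negative argument $V_{\safepolicy}^{(p)}(\state) - C_{\min}$), so \cref{lem:safe_cert} applied to $\safepolicy$ yields the claimed $K$-step $\delta$-safety. The main technical obstacle I anticipate is the middle step of absorbing the correction $\Delta(\state)/\gamma$ into the drift while retaining a genuine class~$\mathcal{K}$ decrease: this is precisely where monotonicity of $\alpha$ and the sign of $1 - 1/\gamma$ are both needed, and it is also the step that relies most on the uniqueness and expressivity assumptions, since they are what upgrade the optimality inequality to a bona fide pointwise bound that can be propagated through the Bellman recursion.
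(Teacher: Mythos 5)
Your proposal is correct and follows essentially the same route as the paper: expressivity and uniqueness give the pointwise dominance $V_{\safepolicy}^{(p)}(\state)\leq V_{\policy}^{(p)}(\state)$, the Bellman identity together with monotonicity of $\alpha$ and $\gamma\in(0,1)$ transfers the drift condition \eqref{eq:pessLF} from $\policy$ to $\safepolicy$, and \cref{lem:safe_cert} closes the argument. The only cosmetic difference is that the paper rewrites the drift as the static inequality $(1-\gamma)V_{\policy}^{(p)}(\state)+\gamma\alpha(V_{\policy}^{(p)}(\state)-C_{\min})\leq c(\state)$ and exploits monotonicity of its left-hand side in $V_{\policy}^{(p)}(\state)$, whereas you subtract the two Bellman equations and track the gap $\Delta(\state)$; the two manipulations are algebraically equivalent.
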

\begin{proof}
Given the policy $\policy_{\state}$ defined as\looseness=-1
\begin{align*}
    \policy_{\state} &= \argmin\limits_{\policy\in\Pi} \max_{\hpolicy\in \mathcal{N}}C(\bm{f},\policy;\state),
\end{align*}
we can lower bound the expected cost for arbitrary $\policy\in\Pi$ by 
$\mathbf{E}_{\state}[V_{\policy}^{(p)}]\geq \mathbf{E}_{\state}[V_{\policy_x}^{(p)}]$. Therefore, for a sufficiently expressive policy class $\Pi$, it must follow that $\policy_x=\argmin_{\policy\in\Pi}\mathbf{E}_{\state}[V_{\policy}^{(p)}]$ holds. Moreover, we have  $\mathbf{E}_{\state}[V_{\policy_x}^{(p)}]\geq \mathbf{E}_{\state}[C(\bm{f},\policy_x;\state)]$ for arbitrary $\hpolicy\in\mathcal{N}$. Therefore, for a sufficiently expressive function class $\mathcal{N}$, it must hold that $\max_{\hpolicy\in\mathcal{N}}\mathbf{E}_{\state}[C(\bm{f},\policy_x;\state)]=\mathbf{E}_{\state}[V_{\policy_x}^{(p)}]$, such that $\safepolicy=\policy_x$ is ensured due to uniqueness of the solution. Since $V_{\safepolicy}^{(p)}(\state)\leq V_{\policy}^{(p)}(\state)$ is satisfied due to point-wise optimality of $\policy_x$ and \eqref{eq:pessLF} can be straightforwardly reformulated to 
\begin{align*}
    (1-\gamma)V_{\policy}^{(p)}(\state)+\gamma\alpha(V_{\policy}^{(p)}( \state)-C_{\min}) \leq c(\state),
\end{align*}
safety of $\safepolicy$ immediately follows from the existence of a $\delta$-safe policy and \cref{lem:safe_cert}.
\end{proof}
Since there exist combinations of dynamics $\dynamics$ and safe sets $\safeset$ for which safety cannot be ensured, \cref{prop:maxSafe} cannot guarantee the $\safepolicy$ to be always $K$-step $\delta$-safe. However, as discussed in \cref{subsec:forward_invar}, there exist system classes for which \eqref{eq:Lyap_cond_reform} can be satisfied. 
Moreover, the necessary function classes for \eqref{eq:maxSafePi} are well-known for many systems, e.g., it is straightforward to see that continuous systems and costs require piece-wise continuous policies in general.

\subsection{Ensuring Constraint Satisfaction with Safety Filters}\label{subsec:safe_filt}

While the learned safe policy $\safepolicy$ is safe during a policy roll-out under certain assumptions, it can possibly result in bad performance since it does not consider the reward function~$r$. In contrast, the nominal policy $\policy^*$ \eqref{eq:opt_pol} results in a high reward of generated trajectories, but can possibly lead to unsafe states. Therefore, we ideally want to maintain the beneficial properties of both policies, while avoiding their shortcomings. The core idea for achieving this relies on a continuous monitoring of every nominal action $\policy^*(\state)$, such that they can be adapted to ensure a safe roll-out of $\safepolicy$ afterwards. Using the reparameterization \eqref{eq:reparam} of the set of plausible models $\mathcal{M}$, this yields our {\em confidence-based safety filter}\looseness=-1
\begin{subequations}\label{eq:safety_filter_const}
\begin{align}
        \!\!\!\hat{\policy}(\state)=
        &\argmin\limits_{\action\in\actionset} \|\policy(\state)-\action\|,\qquad\\
        &\text{s.t. } \max\limits_{\hpolicyaction\in [-1,1]^{d_x}}\expectation{\noise}{ V_{\safepolicy}^{(p)}(\state')} \!\leq\! \xi,\!\!\label{eq:safety_filt_FLcond}\\
    &\quad~~~\state' = \model(\state, \action)\!+\!\ucboundn\!+\!\noise,~\noise\!\sim\!\noisedist.\!
\end{align}
\end{subequations}
Since it cannot be ensured that the state $\statet{k}$ satisfies $V_{\safepolicy}^{(p)}(\statet{k})\leq\xi$ for all $k$, a recovery mechanism steering the system back into this sub-level set is required. This can be straightforwardly achieved using the learned safe policy $\safepolicy$, resulting in the overall roll-out policy
\begin{align}\label{eq:comb_pol}
    \tilde{\policy}(\state) = \begin{cases} 
    \hat{\policy}(\state) &\text{if } V_{\safepolicy}^{(p)}(\state)\leq\xi\\ 
    \safepolicy(\state) &\text{if } V_{\safepolicy}^{(p)}(\state)>\xi
    \end{cases}.
\end{align}
Due to its strong foundation on the learned safe policy $\safepolicy$, the roll-out policy $\tilde{\policy}$ inherits its theoretical safety guarantees as shown in the following theorem.
\begin{theorem}\label{th:safetyFilt}
Consider a set of plausible models $\mathcal{M}$ satisfying \cref{ass:calli} and assume that the learned safe policy $\safepolicy$ satisfies the conditions of \cref{lem:safe_cert}. Then, the confidence-based safety filtered policy \eqref{eq:comb_pol} is $K$-step $\delta$-safe for all states $\state\in\stateset$ with $V_{\safepolicy}^{(p)}(\state)\leq \xi<\bar{\xi}$.
\end{theorem}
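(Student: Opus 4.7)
The plan is to verify that the combined rollout policy $\tilde{\policy}$ in \eqref{eq:comb_pol} satisfies a pessimistic Foster--Lyapunov drift condition with $V_{\safepolicy}^{(p)}$ as the stochastic Lyapunov function, so that the argument underlying \cref{lem:safe_cert} can be reapplied.

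First, I would verify feasibility of the filter \eqref{eq:safety_filter_const} for every $\state$ with $V_{\safepolicy}^{(p)}(\state)\leq\xi$, by showing that the candidate action $\action=\safepolicy(\state)$ is always admissible: plugging it in together with \eqref{eq:pessLF} gives $\max_{\hpolicyaction\in[-1,1]^{d_x}}\mathbb{E}_{\noise}[V_{\safepolicy}^{(p)}(\state')]\leq V_{\safepolicy}^{(p)}(\state)\leq\xi$. Hence the argmin is well-defined and the constraint \eqref{eq:safety_filt_FLcond} is attainable on the entire $\xi$ sub-level set.

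Next, I would perform a case analysis on the two branches of $\tilde{\policy}$. On the branch $V_{\safepolicy}^{(p)}(\state)>\xi$ the rollout policy equals $\safepolicy$, so \eqref{eq:pessLF} directly yields the strict drift $\max_{\hpolicy\in\mathcal{N}}\mathbb{E}_{\noise}[V_{\safepolicy}^{(p)}(\state')]\leq V_{\safepolicy}^{(p)}(\state)-\alpha(V_{\safepolicy}^{(p)}(\state)-C_{\min})$. On the branch $V_{\safepolicy}^{(p)}(\state)\leq\xi$, the rollout policy equals $\hat{\policy}$, and the filter constraint \eqref{eq:safety_filt_FLcond} gives the clamped bound $\max_{\hpolicyaction}\mathbb{E}_{\noise}[V_{\safepolicy}^{(p)}(\state')]\leq\xi$. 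Combining both cases, $V_{\safepolicy}^{(p)}$ is a valid pessimistic stochastic Lyapunov function for $\tilde{\policy}$ with drift threshold $\xi<\bar{\xi}$ under the reparameterized dynamics \eqref{eq:reparam}.

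With this drift condition in hand, I would invoke the same Foster--Lyapunov lemma (\cref{lem:stabBasic}) used in the proof of \cref{lem:safe_cert} to bound the probability that the $K$-step trajectory of $\tilde{\policy}$ exits $\levelsetbasicop{\bar{\xi}}=\{\state:V_{\safepolicy}^{(p)}(\state)\leq\bar{\xi}\}$ by some $\delta_{\mathrm{FL}}(\xi)$. Transferring the pessimistic level-set bound to the true dynamics via \cref{ass:calli} (at the cost of an additive $\delta_f$) and invoking \cref{lem:levelset} to conclude $\levelset{\bar{\xi}}\subset\safeset$ then yields the $K$-step $\delta$-safety claim with $\delta=\delta_{\mathrm{FL}}(\xi)+\delta_f-\delta_{\mathrm{FL}}(\xi)\delta_f$, matching the safety level of \cref{lem:safe_cert}.

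The main obstacle is handling the switching surface $V_{\safepolicy}^{(p)}=\xi$: the filter branch only guarantees that the expected next value is bounded by $\xi$ rather than strictly decreasing, so one must argue that combining "clamping to $\xi$" below the threshold with the strict drift above the threshold still produces a supermartingale-like structure (up to the stopping time at which $V_{\safepolicy}^{(p)}$ first exceeds $\bar{\xi}$) of exactly the form that \cref{lem:stabBasic} requires. A secondary subtlety is the discrepancy between the per-state maximization over $\hpolicyaction\in[-1,1]^{d_x}$ in \eqref{eq:safety_filt_FLcond} and the policy-level maximization over $\hpolicy\in\mathcal{N}$ in \eqref{eq:pessLF}; since only a one-step drift is needed, the two coincide at the state $\state$ under consideration, which closes the gap.
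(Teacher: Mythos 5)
Your proposal follows essentially the same route as the paper: establish feasibility of the filter \eqref{eq:safety_filter_const} on the sub-level set $\{\state : V_{\safepolicy}^{(p)}(\state)\leq\xi\}$ via the trivial candidate $\action=\safepolicy(\state)$, and then inherit $K$-step $\delta$-safety from the Foster--Lyapunov machinery underlying \cref{lem:safe_cert}. The paper's own proof is a two-sentence appeal to \cref{lem:safe_cert}; your explicit case split on the two branches of \eqref{eq:comb_pol}, and in particular your observation that the filter branch only yields the clamped bound $\mathbf{E}_{\noise}[V_{\safepolicy}^{(p)}(\state')]\leq\xi$ rather than a strict drift, makes precise a step that the paper leaves implicit.
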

\begin{proof}
Since $\safepolicy$ satisfies the conditions of \cref{lem:safe_cert}, the trivial solution $\action=\safepolicy(\state)$ is guaranteed to ensure \eqref{eq:safety_filt_FLcond}. Therefore, \eqref{eq:safety_filter_const} is feasible for all states $\state\in\stateset$ with $V_{\safepolicy}^{(p)}(\state)\leq \xi$, such that the $K$-step $\delta$-safety of $\tilde{\policy}$ follows directly from \cref{lem:safe_cert}.
\end{proof}
While the safety filter problem \eqref{eq:safety_filter_const} is not compatible with standard reinforcement learning methods, it can easily be solved on-line in the fashion of model predictive control. 
In order to see this, note that the pessimistic cost-value $V_{\safepolicy}^{(p)}(\state)$ can be efficiently obtained offline using actor-critic methods for reinforcement learning similar to $V_{\policy}^{(p)}$ in \cref{subsec:uc-trick}. Hence, \eqref{eq:safety_filter_const} requires optimization merely for one time step and consequently only for a single actual and hallucinating adversarial action in contrast to similar predictive safety filter approaches \cite{Bastani2021, Wabersich2021b}, which require optimization over a sequence of actions. Therefore, \eqref{eq:safety_filter_const} can be solved with comparatively low computational complexity using numerical optimization schemes, which allows a straightforward on-line application as safety filter.

\begin{remark}\label{rem:xi2tun}
For practical implementation of \eqref{eq:safety_filter_const}, $\xi$ can be considered a tuning parameter. The smaller its value is, the higher the probability of safety is. However, a small $\xi$ will lead to more conservatism of the safety filter, such that it must be carefully chosen to trade-off safety and performance. 
\end{remark}

\section{EXPERIMENTAL RESULTS}\label{sec:eval}
\setlength{\floatsep}{12pt}
\setlength{\textfloatsep}{12pt}
In this section, we evaluate the safety filter and compare it with three competing algorithms: the constraint-free model-free algorithm SAC \cite{haarnoja2018soft}, a Lagrangian primal-dual approach with SAC as the base algorithm, which we call CMDP \cite{Paternain2019}, and the model-based alternative Safe-CEM \cite{liu2020constrained}.
%
We consider two widely used environments to test our approach. First, we test it on an airplane pitch control \cite{hafner2011reinforcement}, where the pitch angle $\theta$ starts at $-0.2$ radians and the constraint function is simply $c_t = \theta_t$ such that the angle should never exceed $0$. The reward is given by $r_t = -2\theta^2 + 0.02u^2$, where $u$ is the control input. 
Second, we use the Mujoco Half-Cheetah environment with the default reward function \cite{todorov2012mujoco}. 
The constraint is that the forward speed is less than $2$. 
Due to the Cheetah's trot, the penalty is on the average forward speed, calculated as $\bar{v}_{t} = 0.1 v_{t} + 0.9 \bar{v}_{t-1}$, $\bar{v}_0 = 0$, where $v_t$ is the instantaneous speed and $\bar{v}_t$ is the average speed. Thus we use $c_t = \bar{v}_t - 2$.
We run each environment for $100$ episodes, each episode for $1000$ time steps, using $\gamma=0.99$ as a discount factor.\looseness=-1

To learn the model, we use deterministic ensembles of five members following \cite{Curi2020}. 
Each member is a neural network with 3 fully connected layers of width $200$ and Swish non-linearities. 
For the first ten episodes, data is collected using a random policy. 
Such random policy was safe in these environments but only at the given initial conditions, i.e., it is not the learned safe policy used by the safety filter.
After the initial exploration phase, the model is pre-trained for $100$ iterations using Adam with learning rate $0.0005$ and weight decay $0.0001$. 
Then, after each subsequent episode, the model is updated using the additional data collected during the episode.
We store the data using an experience replay buffer of at most $100000$ transitions.
Finally, to solve the safety filter problem \eqref{eq:safety_filter_const} we use the cross-entropy method \cite{botev2013cross} with 1000 particles and 5 iterations per time-step. 

In \Cref{fig:pitch_control}, we show the results in the pitch control environment. In this setting, only the Safety Filter algorithm avoids any constraint violation while achieving comparable performance in terms of returns and costs. 
In \Cref{fig:half_cheetah}, we show the results for the Half-Cheetah. 
Here, both Safe-CEM and the safety filter avoid any constraint violations. 
However, the safety filter achieves higher returns than Safe-CEM.
The main difference between these two environments relies on the backup policy. 
While in the Cheetah it is enough to \emph{do nothing} in order to stop it, in the Pitch Control environment this is not the case and the \emph{learned} safe backup policy is crucial to ensure safety. 
Thus, with these two environments we demonstrate the scalability of our method in the Half Cheetah environment as well as the ability to satisfy constraints in the Pitch Control environment.  

\begin{figure}
    \centering
    \includegraphics[width=\columnwidth]{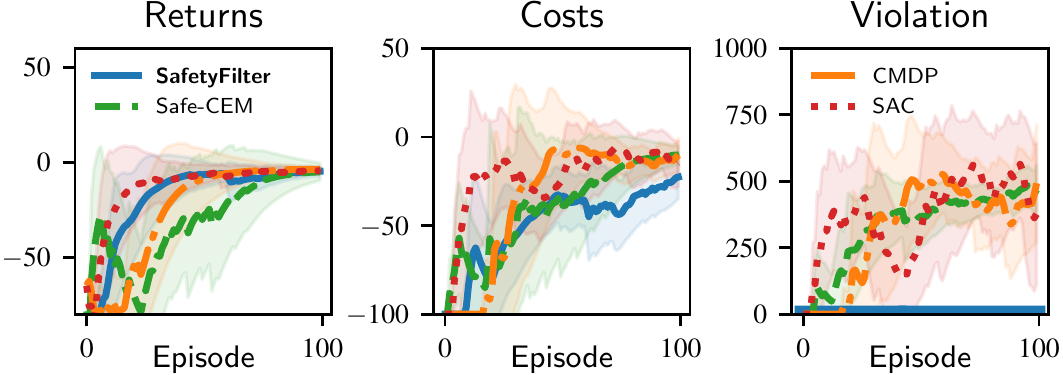}
    \vspace{-0.7cm}
    \caption{Total returns, costs, and constraint violation in the Pitch Control environment. 
    Only the safety filter attains \textbf{no} constraint violations and achieves comparable performance to the benchmarks.
    }
    \label{fig:pitch_control}
\end{figure}

\begin{figure}
    \centering
    \includegraphics[width=\columnwidth]{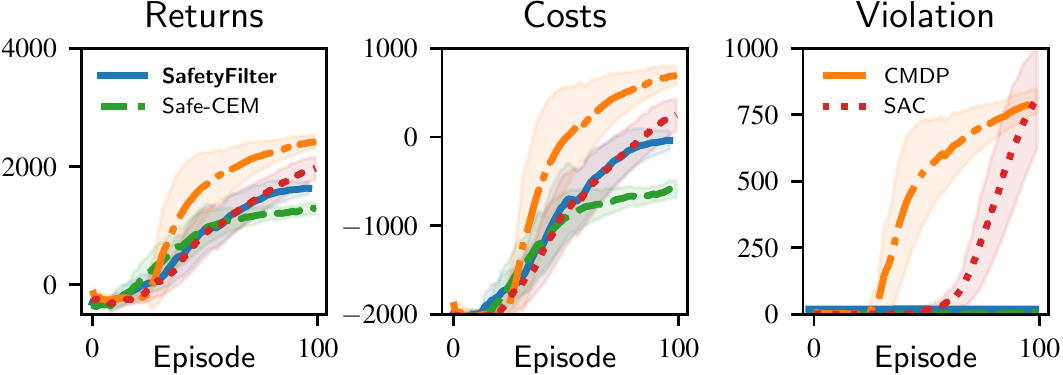}
    \vspace{-0.7cm}
    \caption{
    Total returns, costs, and constraint violation in the Half-Cheetah environment. 
    The safety filter and Safe-CEM achieve \textbf{no} constraint violation.
    These two algorithms perform slightly worse than the benchmarks in terms of returns, but the safety filter performs better than Safe-CEM.}
    \label{fig:half_cheetah}
\end{figure}

\section{CONCLUSION}\label{sec:conc}

In this paper, we introduced confidence-based safety filters, a novel approach for ensuring the safety of nominal policies learnt via standard reinforcement learning techniques. The approach relies on a reformulation of state constraints as cumulative costs, such that safety conditions can be expressed as cost constraints. This allows us to obtain safe policies via robust reinforcement learning, which can be used as ``backup'' policy in a safety filter. We demonstrated  the effectiveness and scalability of our approach in simulations.\looseness=-1




\appendix
\renewcommand{\thelemma}{A.\arabic{lemma}}
\setcounter{lemma}{0}
\renewcommand{\theproposition}{A.\arabic{proposition}}
\setcounter{proposition}{0}
\setlength{\abovedisplayskip}{4pt}
\setlength{\belowdisplayskip}{4pt}
\begin{lemma}\label{lem:constraint}
If there exists a function $V\!:\stateset\!\rightarrow\!\mathbb{R}$ such that\looseness=-1
\begin{align}\label{eq:stabproof3}
    \expectation{\omega}{V(\dynamics(\state,\policy(\state))+\omega)}\leq \theta_1
\end{align}
for $\theta_1\in\mathbb{R}$, then, it holds that 
\begin{align}
    \frac{\theta_2-\theta_1}{\theta_2\!-\!\underline{V}}\leq \probability{\statet{k+1}\in\Vlevelset{\theta_2}|\statet{k}=\state}\leq  \frac{\theta_2-\theta_1}{\bar{V}\!-\!\theta_2}
\end{align}
for every $\underline{V}\leq \theta_1<\theta_2<\bar{V}$, where $\underline{V}=\min_{\state\in\stateset}V(\state)$ and  $\bar{V}=\max_{\state\in\stateset}V(\state)$.
\end{lemma}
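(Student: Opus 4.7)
The plan is to treat this claim as a bounded-random-variable Markov-type inequality. Let $X' := \dynamics(\state,\policy(\state))+\noise$ denote the one-step successor state, so that $V(X')\in[\underline V,\bar V]$ with $\mathbb{E}_{\noise}[V(X')]\le\theta_1$ by hypothesis, and let $q := \mathbb{P}(X'\in \Vlevelset{\theta_2}\mid \statet{k}=\state)$. The natural reading is $\Vlevelset{\theta_2} = \{x\in\stateset: V(x)\le \theta_2\}$, since the assumption $\theta_1<\theta_2<\bar V$ is exactly what makes this sub-level set non-trivial.

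The key step is to split the expectation along the indicator of $\{X'\in\Vlevelset{\theta_2}\}$ and use the pointwise bounds on $V$. For the lower bound on $q$, I use $V(X')\ge\underline V$ on $\Vlevelset{\theta_2}$ and $V(X')> \theta_2$ on its complement to obtain
\begin{align*}
\mathbb{E}_{\noise}[V(X')] \;\ge\; \underline V\, q + \theta_2\,(1-q) \;=\; \theta_2 - q(\theta_2-\underline V).
\end{align*}
Combining with $\mathbb{E}_{\noise}[V(X')]\le\theta_1$ and solving for $q$ yields $q\ge (\theta_2-\theta_1)/(\theta_2-\underline V)$, which is just Markov's inequality applied to the non-negative variable $V(X')-\underline V$.

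For the upper bound I would dualize, working with the non-negative bounded variable $W := \bar V - V(X')\in[0,\bar V-\underline V]$. By hypothesis $\mathbb{E}_{\noise}[W]\ge \bar V-\theta_1$, and the event of interest rewrites as $\{X'\in\Vlevelset{\theta_2}\}=\{W\ge \bar V-\theta_2\}$. Applying the analogous decomposition to $W$ about the threshold $\bar V-\theta_2$, using $W\le \bar V-\underline V$ on $\{W\ge \bar V-\theta_2\}$ and $W<\bar V-\theta_2$ on its complement, and then rearranging against the lower bound $\mathbb{E}_{\noise}[W]\ge \bar V-\theta_1$, should deliver the stated upper bound $(\theta_2-\theta_1)/(\bar V-\theta_2)$.

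The main obstacle is that the hypothesis provides only a \emph{one-sided} (upper) bound on $\mathbb{E}_{\noise}[V(X')]$, so the upper bound on $q$ is not a direct consequence of the usual Markov argument. The dualization to $W$ must be done carefully: the split has to be arranged so that, after the strict-versus-non-strict level-set bookkeeping and cancellation, the numerator of the final fraction collapses to $\theta_2-\theta_1$ (rather than to something like $\bar V-\theta_1$). Once the right decomposition is identified, the rest is elementary algebra, exactly mirroring the derivation of the lower bound.
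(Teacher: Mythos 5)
Your treatment of the lower bound is correct and coincides with the paper's own argument: the paper bounds $(\theta_2-\underline{V})\,\probability{\statet{k+1}\notin\Vlevelset{\theta_2}\,|\,\statet{k}=\state}$ from above by $\expectation{\noise}{V(\statet{k+1})-\underline{V}\,|\,\statet{k}=\state}\leq\theta_1-\underline{V}$, which is precisely your Markov inequality for $V-\underline{V}$ phrased through the complementary event.

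The genuine gap is the upper bound, and you have correctly sensed it without resolving it. If you actually carry out your dualization with $W=\bar{V}-V(X')$ and $q=\probability{W\geq\bar{V}-\theta_2}$, the split $\expectation{\noise}{W}\leq(\bar{V}-\underline{V})q+(\bar{V}-\theta_2)(1-q)$ together with $\expectation{\noise}{W}\geq\bar{V}-\theta_1$ rearranges to $q\geq(\theta_2-\theta_1)/(\theta_2-\underline{V})$; you recover the lower bound a second time, not an upper bound. To bound $q$ from above via $\expectation{\noise}{W}\geq(\bar{V}-\theta_2)q$ you would need an \emph{upper} bound on $\expectation{\noise}{W}$, i.e.\ a \emph{lower} bound on $\expectation{\noise}{V(X')}$, which the hypothesis does not supply. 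In fact the stated upper bound is false under the stated hypothesis: if the successor state lands almost surely at a minimizer of $V$, then $\expectation{\noise}{V(X')}=\underline{V}\leq\theta_1$ and $\probability{\statet{k+1}\in\Vlevelset{\theta_2}\,|\,\statet{k}=\state}=1$, yet $(\theta_2-\theta_1)/(\bar{V}-\theta_2)<1$ whenever $\theta_2-\theta_1<\bar{V}-\theta_2$. So no bookkeeping of strict versus non-strict level sets will rescue the claim; the correct ``analogous'' statement requires the reversed hypothesis $\expectation{\noise}{V(X')}\geq\theta_1$ and then reads $\probability{\statet{k+1}\in\Vlevelset{\theta_2}\,|\,\statet{k}=\state}\leq(\bar{V}-\theta_1)/(\bar{V}-\theta_2)$. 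The paper's proof conceals this by writing only that ``the proof for the upper bound is analogous'' (and the expression actually invoked in the proof of Proposition A.1 carries the denominator $\bar{V}-\theta_1$ rather than $\bar{V}-\theta_2$, indicating the lemma statement itself is garbled). Your honest flag that the one-sidedness of the hypothesis is ``the main obstacle'' is exactly right; that obstacle is not surmountable as the lemma is written.
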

\begin{proof}
In order to prove this lemma, we follow the ideas of \cite{Li2013}. It is straightforward to see that
\begin{align}\label{eq:stabproof1}
    &(\theta_2-\underline{V})\probability{\statet{k+1}\notin\Vlevelset{\theta_2}|~\statet{k}=\state}\leq\\ &\qquad\qquad\quad\expectation{\noise_{k}}{\indicator{\statet{k+1}\notin\Vlevelset{\theta_2}}(V(\statet{k+1})-\underline{V})| ~\statet{k}=\state},\nonumber
\end{align}
where $\Vlevelset{\theta_2}=\{\state\in\stateset: V(\state)<\theta_2 \}$, and  $\indicator{\statet{k+1}\notin\Vlevelset{\theta_2}}=1$ if $\statet{k+1}\notin\Vlevelset{\theta_2}$ and $0$ otherwise, since $\indicator{\statet{k+1}\notin\Vlevelset{\theta_2}}V(\statet{k+1})\geq \indicator{\statet{k+1}\notin\Vlevelset{\theta_2}}\theta_2$. Moreover, we trivially have
\begin{align}\label{eq:stabproof2}
    &\expectation{\noise_{k}}{\indicator{\statet{k+1}\notin\Vlevelset{\theta_2}}(V(\statet{k+1})-\underline{V})| ~\statet{k}=\state}\leq\! \\
    &\qquad\qquad\qquad\qquad\qquad\expectation{\noise_{k}}{(V(\statet{k+1})-\underline{V})| ~\statet{k}=\state}.\nonumber
\end{align}
By combining \eqref{eq:stabproof3}, \eqref{eq:stabproof1} and \eqref{eq:stabproof2}, we therefore obtain
$(\theta_2-\underline{V})\probability{\statet{k+1}\notin\Vlevelset{\theta_2}|~\statet{k}=\state}\leq -\underline{V}+\theta_1,$
which results in 
\begin{align}
    \probability{\statet{k+1}\notin\Vlevelset{\theta_2}|~\statet{k}=\state}\leq \frac{\theta_1-\underline{V}}{\theta_2-\underline{V}}.
\end{align}
The proof for the upper bound is analogous.
\end{proof}

\begin{proposition}
\label{lem:stabBasic}
Assume there exists a function $V\!\!:\!\stateset\!\rightarrow\!\mathbb{R}$ and a class $\mathcal{K}$ function $\alpha:\mathbb{R}\rightarrow\mathbb{R}_{0,+}$, such that
\begin{align}
\label{eq:Lyap_cond}
    \expectation{\omega}{V(\dynamics(\state,\policy(\state))+\omega)}-V(\state)\leq -\alpha(V(\state))
\end{align}
holds for all $\state\in\Vlevelset{\bar{\xi}}$ for $\bar{\xi}\in\mathbb{R}$. Then, $\expectation{\omega}{V(\dynamics(\state,\policy(\state))+\omega)}\leq \xi$
with $\xi<\bar{\xi}$ ensures 
\begin{align}
    \probability{V(\statet{k})\leq \bar{\xi}~\forall k=1,\ldots,K|\statet{0}=\state}\geq 1-\delta_{\mathrm{FL}}(\xi)
\end{align}
with
\begin{align}\label{eq:delta2}
    \delta_{\mathrm{FL}}(\xi)=\begin{bmatrix}1&\cdots&0 \end{bmatrix}\! \begin{bmatrix}
    1&\bm{1}^T(\bm{I}-[\bm{P}]_+)\\
    \bm{0}&[\bm{P}]_+
    \end{bmatrix}^{\!K}\!\begin{bmatrix} 0\vspace{-0.2cm}\\ \vdots\vspace{-0.05cm}\\ 1\end{bmatrix}\!\!,\!
\end{align}
where the elements of $\bm{P}$ are defined as
\begin{align}\label{eq:pij}
    \!p_{i,j} \!=\!\!\begin{cases}
    \!\frac{\theta^i\!-\theta^j\!+\alpha(\theta^{j\!+\!1}\!+\!\underline{V})}{\theta^i\!-\underline{V}}
    \!-\!\frac{\theta^{i\!+\!1}\!-\theta^j\!+\alpha(\theta^{j\!+\!1}\!+\underline{V})}{\bar{V}\!-\theta^j\!+\alpha(\theta^{j\!+\!1}\!+\underline{V})}&\!\!\text{if } i\!\leq\! j\\
    \!\frac{(1-\vartheta)\alpha(\theta^{j\!+\!1}+\underline{V})}{\theta^j-\alpha(\theta^{j\!+\!1}+\underline{V})-\underline{V}}&\!\!\text{if } i\!=\!j\!+\!1.\!\!\\
    \!0&\!\!\text{if } i\!>\!j\!+\!1
    \end{cases}\!
\end{align}
and $M$ is the largest integer such that $\theta^1\leq\bar{\xi}$ for $\theta^i$ recursively defined by $\theta^{i-1} = \theta^i+\vartheta\alpha(\theta^i+\underline{V})$
with $\theta^{M+1}$ implicitly defined via $\theta^{M+1}+(\vartheta-1)\alpha(\theta^{M+1}+\underline{V})=\xi$ and sufficiently small $\vartheta\!\in\! (0,1)$.\looseness=-1 
\end{proposition}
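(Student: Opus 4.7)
The plan is to approximate the evolution of the scalar quantity $V(\statet{k})$ by an auxiliary finite-state Markov chain that tracks which ``band'' of $V$-values the state currently lies in, and then read off the probability of exiting $\Vlevelset{\bar{\xi}}$ within $K$ steps from a matrix-power expression. The chain's transition probabilities are obtained by applying \cref{lem:constraint} inside each band.

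First I would define the bands $\mathcal{B}_j = \{\state\in\stateset : \theta^{j+1} < V(\state) \leq \theta^{j}\}$ for $j=1,\ldots,M$ together with the exit set $\mathcal{B}_0 = \stateset\setminus\Vlevelset{\bar{\xi}}$. The recursion $\theta^{i-1} = \theta^i + \vartheta\alpha(\theta^i + \underline{V})$ is designed so that the drift condition \eqref{eq:Lyap_cond} forces a strictly positive expected decrease across each band boundary; more precisely, for $\state\in\mathcal{B}_j$ the inequality $V(\state) \leq \theta^j$ together with monotonicity of $\alpha$ and the class~$\mathcal{K}$ property give
\begin{align*}
    \expectation{\omega}{V(\state')} \leq V(\state) - \alpha(V(\state)) \leq \theta^j - \alpha(\theta^{j+1}+\underline{V}).
\end{align*}
Here the shift by $\underline{V}$ matches the form appearing in $p_{i,j}$, and the boundary case $j=M$ is handled by the implicit definition of $\theta^{M+1}$, which is chosen precisely so that the hypothesis $\expectation{\omega}{V(\dynamics(\state,\policy(\state))+\omega)}\leq\xi$ in the proposition coincides with the above drift bound at the deepest band.

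Second, I would feed this band-wise expected bound into \cref{lem:constraint} twice: once with $\theta_2 = \theta^i$ to upper-bound the probability of landing in $\bigcup_{k<i}\mathcal{B}_k$, and once with $\theta_2 = \theta^{i+1}$ to lower-bound the probability of staying at or above band~$i$. Subtracting the two bounds yields an upper estimate on $\probability{\statet{k+1}\in\mathcal{B}_i \mid \statet{k}=\state}$ for $\state\in\mathcal{B}_j$ that matches, after bookkeeping, the $i\leq j$ case of \eqref{eq:pij}. The single-step ``down'' transition ($i=j+1$) is likewise obtained by applying \cref{lem:constraint} with $\theta_2 = \theta^j - \alpha(\theta^{j+1}+\underline{V})$, and the ``up by more than one band'' transitions ($i > j+1$) are impossible in expectation because the drift is dissipative; formalising this as a probabilistic statement gives $p_{i,j}=0$ in that case. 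Taking the positive part $[\bm P]_+$ in \eqref{eq:delta2} absorbs the cases where a bound would be trivially negative.

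Third, I would assemble these bounds into a transition matrix of a Markov chain on states $\{\mathcal{B}_0,\mathcal{B}_1,\ldots,\mathcal{B}_M\}$ with $\mathcal{B}_0$ absorbing. The residual column $\bm{1}^T(\bm I-[\bm P]_+)$ accounts for the mass of transitioning directly into $\mathcal{B}_0$, i.e.\ leaving $\Vlevelset{\bar{\xi}}$. Stochastic dominance between the true process $V(\statet{k})$ and this auxiliary chain, proved by induction on $k$, implies that the true process exits $\Vlevelset{\bar{\xi}}$ within $K$ steps with probability at most the probability that the chain reaches $\mathcal{B}_0$ within $K$ steps. The latter is exactly the top-left entry of the $K$-th power of the augmented matrix in \eqref{eq:delta2} applied to the initial indicator vector corresponding to starting in the deepest band (which the hypothesis $\expectation{\omega}{V(\state')}\leq\xi$ guarantees), giving $\delta_{\mathrm{FL}}(\xi)$.

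The main obstacle will be the bookkeeping in step~2: verifying that each of the three cases in \eqref{eq:pij} is reproduced by the correct application of \cref{lem:constraint}, in particular reconciling the choice of $\theta_2$ with the shifted arguments $\theta^{j+1}+\underline{V}$ inside $\alpha$, and handling the boundary band $j=M$ where the drift inequality is replaced by the weaker $\xi$-bound. The stochastic dominance in step~3 is also delicate because it must hold conditionally on the entire history; this is resolved by noting that the band-to-band bounds depend on $\state$ only through its band index, so the strong Markov property lets us couple the two chains step by step.
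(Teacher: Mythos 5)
Your plan follows essentially the same route as the paper's proof: partition $\Vlevelset{\bar{\xi}}$ into bands between consecutive thresholds $\theta^i$, bound the band-to-band transition probabilities by applying \cref{lem:constraint} (twice, as a difference of sub-level-set probabilities, for the $i\leq j$ case; once for the $i=j+1$ case), and dominate the exit probability by the $K$-th power of the resulting absorbing-state Markov chain. One small correction: the zero entries for $i>j+1$ arise not because such transitions are ``impossible'' --- they correspond to jumping to much \emph{deeper} sub-level sets, whose probability simply cannot be usefully lower-bounded from the drift condition, so $0$ is taken as the trivial lower bound.
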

\begin{proof}
For proving this proposition, we construct a sequence of sub-level sets $\Vlevelset{\theta^j}$ as illustrated in \cref{fig:contraction} and bound the transition probabilities between them  using \cref{lem:constraint}.
Given a sub-level set $\Vlevelset{\theta^j}$, the probability of transitioning into sub-level set $\Vlevelset{\theta^{j+1}}$ can be lower bounded using $\theta_2=\theta^j-\vartheta\alpha(\theta^{j+1}-\underline{V})$, $\theta_1=\theta^j-\alpha(\theta^{j+1}-\underline{V})$, which yields\looseness=-1
\begin{align*}
    p_{j+1,j}=\frac{(1-\vartheta)\alpha(\theta^{j+1}+\underline{V})}{\theta^j-\alpha(\theta^{j+1}+\underline{V})-C_{\min}}.
\end{align*}
For transitioning from the sub-level set $\Vlevelset{\theta^j}$ to a sub-level set $\Vlevelset{\theta^i}$, $i\leq j$, we have
\begin{align*}
    &\probability{\statet{t+1}\in\Vlevelset{\theta^i}\setminus\Vlevelset{\theta^{i+1}}|\statet{t}\in\Vlevelset{\theta^j} }=\\
    &\quad~\probability{\statet{t+1}\in\Vlevelset{\theta^i}|\statet{t}\in\Vlevelset{\theta^j} }-\probability{\statet{t+1}\in\Vlevelset{\theta^{i+1}}|\statet{t}\in\Vlevelset{\theta^j} }\!,\nonumber
\end{align*}
such that applying \cref{lem:constraint} to both summands with $\theta_2=\theta^i$, $\theta_1=\theta^j-\alpha(\theta^{j+1}-\underline{V})$ and $\theta_2=\theta^{i+1}$, $\theta_1=\theta^j-\alpha(\theta^{j+1}-\underline{V})$, respectively, yields
\begin{align*}
    p_{i,j}\!=\!\frac{\theta^i\!-\!\theta^j\!+\!\alpha(\theta^{j\!+\!1}\!+\!\underline{V})}{\theta^i\!-\underline{V}}
    \!-\!\frac{\theta^{i\!+\!1}\!-\!\theta^j\!+\!\alpha(\theta^{j\!+\!1}\!+\!\underline{V})}{\bar{V}\!-\theta^j\!+\!\alpha(\theta^{j\!+\!1}\!+\underline{V})}.
\end{align*}
Note that for $i=M$ we have $\theta_1=\xi$. 
Since we cannot guarantee to directly transition from sub-level sets $\Vlevelset{\theta^j}$ to sub-level sets $\Vlevelset{\theta^i}$ with $i\geq j+2$, we obtain the trivial bound $p_{i,j}=0$ in this case, which results in \eqref{eq:pij}. Based on the bounds $p_{i,j}$, we can construct a left stochastic matrix similar to the transition matrix of a Markov chain, whose first row corresponds to an absorbing state as shown in \eqref{eq:delta2}. Since the first state is absorbing and the transition probabilities to all other states are lower bounds, multiplying this matrix $K$ times with itself and multiplying the initial probability distribution from the right yields the upper bound $\delta$ for leaving the sub-level set $\Vlevelset{\bar{\xi}}$ within $K$ time steps.
\end{proof}

\begin{figure}
    \centering
    \scalebox{0.75}{
    \begin{tikzpicture}
    
    
    \draw[blue!70!black, rounded corners=0.7cm, fill=blue!30] (4,0.5) -- (5,0.5) -- (5.6,2) -- (7.0,4) -- (4,3.9) -- (2,3.) -- (0,3.9) -- (1,0.9) -- (4,0.5);
    \draw[blue!90!white, rounded corners=0.7cm, fill=blue!20] (4,1.0) -- (5,1.0) -- (5.3,2.5) -- (6.7,3.9) -- (4,3.5) -- (2,2.7) -- (0.5,3.4) -- (1.6,1.2) -- (4,1.0);
    \node[blue!70!black] at (4.6,0.8) {$\levelset{\bar{\xi}}$};
    
    \draw[blue!60!white, rounded corners=0.7cm, fill=blue!10] (3.3,1.3) -- (4.5,1.3) -- (4.9,2.6) -- (6.5,3.7) -- (4,3.2) -- (2.4,2.4) -- (1.1,2.8) -- (2.0,1.5) -- (3.3,1.3);
    \node[blue!70!white] at (4.4,1.2) {$\levelset{\theta^1}$};
    
    \draw[blue!40!white, rounded corners=0.7cm, fill=white] (3.4,1.5) -- (4.1,1.7) -- (4.7,2.9) -- (6.1,3.5) -- (4,3.0) -- (2.8,2.2) -- (1.5,2.6) -- (2.5,1.5) -- (3.4,1.5);
    \node[blue!60!white] at (4.1,1.6) {$\levelset{\theta^2}$};
    \node[blue!40!white] at (3.7,1.9) {$\levelset{\xi}$};
    
    
    \draw[-latex] (1.6,2.4) -- (-1.4,2.4);
    \node at (-0.5,2.9) {$1\!-\!\sum\limits_{i=1}^3p_{i,2}$};
    
    \draw[-latex] (2.0,1.8) -- (2.6,1.8);
    \node at (2.3,2.0) {$p_{3,2}$};
    
    \draw[-latex] (1.9,2.0) -- (1.3,2.0);
    \node at (1.6,2.2) {$p_{1,2}$};
    
    \draw[] (3.1,1.9) to[out=180,in=180,looseness=3] (3.1,1.4);
    \draw[-latex] (3.1,1.9) to[out=0,in=0,looseness=3] (3.1,1.4);
    \node at (3.1,1.2) {$p_{2,2}$};
    
    \end{tikzpicture}}
    \vspace{-0.3cm}
    \caption{In order to certify the $K$-step $\delta$-safety of a policy $\policy$, we define a sequence of sub-level sets $\levelset{\theta^i}$ with decreasing thresholds $\theta^i$. We can bound the probabilities for transitioning to other sub-level sets in each time step using \cref{lem:constraint} as exemplarily illustrated for $\levelset{\epsilon^2}$, such that the probability of leaving the $\levelset{\bar{\xi}}$ can be bounded using methods for Markov chains.\looseness=-1}
    \label{fig:contraction}
\end{figure}
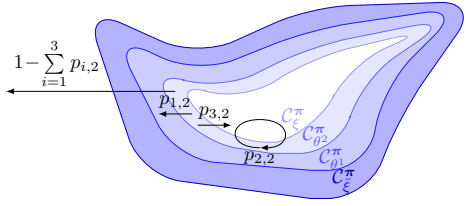






\bibliographystyle{IEEEtran}
\bibliography{IEEEabrv,root.bib}

\end{document}